\let\classAND\AND
\let\AND\relax
\let\AND\classAND
\theoremstyle{plain}
\newtheorem{theorem}{Theorem}[section]
\newtheorem{lemma}[theorem]{Lemma}
\theoremstyle{definition}
\theoremstyle{remark}
\DeclareMathOperator*{\argmax}{argmax}
\newcommand{\alglinelabel}{%
  \addtocounter{ALC@line}{-1}
  \refstepcounter{ALC@line}
  \label
}
\newcommand{\added}[1]{{\color{blue} #1}}
\renewcommand{\added}[1]{#1}
\newcommand{\rev}[1]{{\color{blue} #1}}
\renewcommand{\rev}[1]{#1}
\newcommand{\addedtmlr}[1]{{\color{blue} #1}}
\renewcommand{\addedtmlr}[1]{#1}
\title{Cost-Efficient Online Decision Making: \\A Combinatorial Multi-Armed Bandit Approach}
\author{\name Arman Rahbar$^*$ \email armanr@chalmers.se \\
      \addr Chalmers University of Technology and University of Gothenburg
      \AND
      \name Niklas {\AA}kerblom$^*$ \email niklas.akerblom@volvocars.com \\
      \addr Volvo Car Corporation \\
      Chalmers University of Technology and University of Gothenburg  \\
      \AND
      \name Morteza Haghir Chehreghani \email morteza.chehreghani@chalmers.se\\
      \addr Chalmers University of Technology and University of Gothenburg \\
      }
\begin{document}
\maketitle
\def\thefootnote{*}\footnotetext{These authors contributed equally to this work.}\def\thefootnote{\arabic{footnote}}
\begin{abstract}
Online decision making plays a crucial role in numerous real-world applications. In many scenarios, the decision is made based on performing a sequence of tests on the incoming data points. However, performing all tests can be expensive and is not always possible. In this paper, we provide a novel formulation of the online decision making problem based on combinatorial multi-armed bandits and take the (possibly stochastic) cost of performing tests into account. Based on this formulation, we provide a new framework for cost-efficient online decision making which can utilize posterior sampling or BayesUCB for exploration. We provide a theoretical analysis of Thompson Sampling for cost-efficient online decision making, and present various experimental results that demonstrate the applicability of our framework to real-world problems.
\end{abstract}

\section{Introduction}
\label{sec:introduction}

\textit{Online decision making} (ODM) is concerned with interactions of an agent with an environment through a series of sequential tests in order to acquire sufficient information about the environment and successively learn how to make better decisions. Each such test and decision can be associated with either a reward or a cost. Here, we focus on problem settings where tests may incur stochastic costs. 

One such example is medical diagnosis. A patient arrives at a hospital with an unknown affliction. In order to determine an appropriate treatment for the patient, a number of medical tests need to be performed. Since each of these tests might take time (which can differ depending on the test result and the underlying affliction), be expensive or cause discomfort to the patient, it is desirable to limit the set of tests to the most informative ones. On the other hand, it is apparent that misdiagnosing the patient can have severe consequences, so performing too few tests is also undesirable. 

Another example is adaptive identification of driver preferences for in-car navigation systems. Consider the selection of charging stations during long-distance trips with battery-electric vehicles (BEVs). Since the duration of a single charging stop can be as long as 40 minutes, it may be desirable to select a charging station with additional amenities in close proximity, such as restaurants, shops, or public restrooms. While it is possible to ask questions through the vehicle user interface of drivers to learn their charging location preferences, such questions should be kept few, short, and simple. We note that in both of these examples, the problem instances (e.g., diagnosis of patients and identification of driver preferences) arrive incrementally in an \emph{online} manner and should be processed accordingly.

We take a novel view of the online decision making problem described above, where we cast it as a \textit{combinatorial multi-armed bandit problem} (CMAB) \citep{cesa2012combinatorial}, an extension of the classical \textit{multi-armed bandit problem} (MAB). The standard MAB problem is a common way of representing the trade-off inherent to decision making problems between exploring an environment by performing successive actions (playing arms, in bandit nomenclature) to gain more knowledge and exploiting that knowledge to reach a long-term objective. In a CMAB problem, an agent interacts with the environment in each time step by performing a set of actions (a \textit{super arm}), where the set may be subject to combinatorial constraints. Each individual action (\textit{base arm}) in that set may result in some observed feedback (so-called \textit{semi-bandit} feedback), and the agent also receives a reward associated with the complete super arm.

In this work, we view the combined choice of tests to perform and the selected decision as a super arm in a CMAB problem where the underlying cost distribution parameters are initially unknown, which allows us to leverage effective MAB exploration strategies like \textit{Thompson Sampling} (TS) \citep{thompson1933likelihood} and \textit{Upper Confidence Bound} (UCB) \citep{auer2002finite} together with cost-efficient (active) information acquisition methods, such as \textit{Equivalent Class Edge Cutting} (EC$^2$) \citep{golovin2010near} and \textit{Information Gain} (IG) \citep{dasgupta2005analysis}. We summarize our contributions as follows.
\begin{enumerate}
    \item We study the \rev{novel problem of outcome-dependent cost-efficient online decision making}, where costs may be stochastic and depend on both test and decision outcomes. 
    \item \rev{We propose a novel combinatorial semi-bandit framework for our online decision making problem. This formulation involves} \added{an elegant definition of base arms and super arms}, providing a new perspective on cost-efficient information acquisition \added{ while simultaneously exploring the concept of interactive CMAB oracles.}
    \item We adapt a number of bandit methods to this framework, namely Thompson Sampling and Upper Confidence Bound, combined with novel extensions of EC$^2$ and IG which can handle tests with different and stochastic costs, which we call W-EC$^2$ and W-IG, respectively. Thereby, in addition to Thompson Sampling, we develop W-EC$^2$ and W-IG oracles utilizing a Bayesian variant of UCB (called BayesUCB by \citealt{kaufmann2012bayesian}) that fits well to our framework and enables the UCB method to effectively employ prior knowledge in the form of a prior distribution.
    \item We demonstrate the effectiveness of our framework via several experimental studies performed on data sets from a variety of important domains. We find that Thompson Sampling yields the best performance in most of the experiments we perform.
    \item We theoretically analyze the performance of Thompson Sampling within our framework for cost-efficient online decision making and establish an upper bound on its Bayesian regret.
\end{enumerate}

\section{Related Work}
\label{sec:related_work}

\paragraph{Cost-efficient information acquisition.} 
The primary goal of cost-efficient information acquisition (\addedtmlr{also called Bayesian active learning in the literature}) algorithms is to sequentially select from a number of available costly tests to make a decision (such as making a prediction about a label) with a minimal cost. To minimize cost, these tests are performed until a sufficient level of confidence in the decision is reached. Prominent algorithms for cost-efficient information acquisition are Information Gain (IG) \addedtmlr{(originally developed by \citealt{lindley1956measure})},
EC\textsuperscript{2} \citep{golovin2010near}, and Uncertainty Sampling (US). \addedtmlr{Bayesian Active Learning by Disagreement (BALD) \citep{houlsby2011bayesian} proposes an equivalent formulation of the underlying utility function in IG which provides practical and computational advantages.} EC\textsuperscript{2} has been proved to enjoy adaptive submodularity \citep{golovin2010near}, and thus yields near-optimal cost. \addedtmlr{Acquisition functions in Bayesian optimization have also been used in active learning. Probability of Improvement (PI) \citep{Kushner1964ANM} and Expected Improvement (EI) \citep{jones1998efficient} are two important acquisition functions used in active learning. Both PI and EI use surrogate models (e.g., Gaussian Processes) to select the next sample to query (see \cite{di2024active} for more details).} The works of \citet{rahbar2023efficient,chen2017efficient} consider active information acquisition in an online setting. The regret bounds in these works are based on formulating the problem as a Partially Observable Markov Decision Process (POMDP) and are \emph{exponential} in the total number of possible tests. In this work, we consider a CMAB formulation of the problem and derive a regret bound that is \emph{linear} in the number of tests (and \emph{sublinear} w.r.t. the horizon $T$). In addition, unlike \citet{rahbar2023efficient,chen2017efficient}, our framework supports variability in the cost of the tests depending on the outcomes of the tests and the decisions.
\paragraph{Combinatorial semi-bandit algorithms.} 
CMAB methods have been utilized to address online learning problems in various settings but often without considering cost-efficient information acquisition. 
\citet{durand2014thompson} adapt a combinatorial variant of Thompson Sampling for \emph{online feature selection} where the agent has a fixed budget for tests and the reward is a linear function of the test outcomes. Both assumptions are used in later works studying the problem from a CMAB perspective \citep{bouneffouf2017, bouneffouf2021}, but differ from the setting considered in this work (e.g., the budget for information acquisition is fixed with no concept of achieving a decision, and the costs are independent of the outcomes of the tests and the decisions). \added{\cite{wang2018thompson} analyze the regret of combinatorial Thompson sampling, but their proof technique is not directly applicable to our problem since they do not allow changes in the set of available super arms. Therefore, in this work, we adopt the approach originally developed by \cite{russo2014learning} for standard and linear MAB. Finally, it is notable that the analysis of \cite{wang2018thompson} yields an instance dependent regret bound, whereas our analysis is focused on deriving Bayesian and instance-independent bounds.}
\addedtmlr{\paragraph{Online decision making.} Our work relates to the theoretical online decision making framework presented by \cite{cesa2021roi}. Their framework aims to maximize total Return on Investment (ROI) in a sequence of decision making tasks, where each task involves accepting or rejecting an innovation. The authors provide an algorithm to learn ROI-maximizing decision making policies, with theoretical guarantees of convergence to an optimal policy. In the framework, accept/reject decisions are made by drawing i.i.d. samples from a probability distribution modeling the value of an innovation. In contrast, our approach involves making low-cost decisions by performing a sequence of different tests (with different costs). Furthermore, our work is not limited to binary accept/reject decisions.}
\addedtmlr{
\begin{algorithm}[H]
\caption{Online decision making protocol}
\label{alg:protocol}
\begin{algorithmic}[1] 
\STATE  $\{x_1, x_2, \dots, x_n\}$: available tests\\ 
$[m]$: possible decisions\\ 
$\mathcal{A} = \{a_{ij}|i\in[n], j\in[m]\}$: set of base arms\\ 
$\mu_{ij}^{(0)}$ and $\mu_{ij}^{(1)}$: costs of base arm $a_{ij}$ (given test outcomes $0$ and $1$, respectively)\\
$\theta_{ij} = Pr(x_i = 1 | y = j)$
\FOR{t = 1, 2, \dots, T}
\STATE Receive problem instance $\mathbf{x}^{(t)}$. 
\FORALL{selected tests $x_i^{(t)}$ (by an oracle)}
\STATE Observe outcome of test $x_i^{(t)} \sim \text{Bernoulli}(\theta_{ij})$.
\ENDFOR
\STATE Make a decision $y^{(t)} = j \in [m]$ based on test outcomes.
\STATE Super arm $S^{(t)} = \{a_{ij}\in\mathcal{A}| \text{test } x_i^{(t)} \text{is performed}\} \in \mathcal{I}^{(t)}$ where $\mathcal{I}^{(t)} \subseteq 2^{\mathcal{A}}$ is set of feasible super arms. 
\STATE Receive reward $R(S^{(t)})$ where $\mathbb{E}[R(S^{(t)})]= - \sum_{ij \in S^{(t)}} \mu_{ij}$ and $\mu_{ij} = \mu_{ij}^{(1)}\times \theta_{ij} + \mu_{ij}^{(0)}\times (1-\theta_{ij})$.
\ENDFOR
\end{algorithmic}
\end{algorithm}
}
\section{Problem Formulation}
\label{sec:formulation}
In this section, we propose our new formulation of the online decision making problem which incorporates the cost of acquiring test outcomes.  
In each time step $t$, we receive the problem instance (e.g., data point) $\mathbf{x}^{(t)}$ with $n$ tests $\{x_1^{(t)}, x_2^{(t)}, \dots, x_n^{(t)}\}$. Here, we assume that all test outcomes (unknown until after each test is performed) are binary (i.e., $x_i^{(t)} \in \{0,1\}$ for all $i$), but our methods can easily be extended to other test values (see Section \ref{sec:ext_continuous}). The goal is to make an accurate decision $y^{(t)} \in [m]$ for that problem instance with a minimal cost of performing tests ($m$ represents the total number of options for the decision, or \emph{outcomes} if the correct decision is viewed as a random variable), where we denote the correct decision by $y^*$. \addedtmlr{As an example, let's consider the medical problem of diagnosing lung cancer. At each time step $t$, a patient $\mathbf{x}^{(t)}$ arrives, and our goal is to make a decision (prediction) $y^{(t)} \in \{1 (benign),2 (malignant), 3(infection)\}$ ($m=3$). In this problem, there are several tests that can be performed to reach a decision. For instance, available tests can include a CT\footnote{Computed Tomography} scan, a PET\footnote{
Positron Emission Tomography} scan, genetic testing and a biopsy ($n=4$). These tests can have positive or negative outcomes (test results), which will be used in the decision making process.}

Consistent with the common Naïve Bayes assumption, we let all tests $x_i^{(t)}$ be mutually independent and Bernoulli-distributed conditional on the decision $y^{(t)}$. \added{The Naïve Bayes assumption is common for sequential decision making, even in the offline setting, and is consistent with previous works \addedtmlr{including \cite{chen2017efficient, rahbar2023efficient}.} \added{This assumption is needed for computing the gains in our approximate oracles defined in Section \ref{sec:sequential_algs} (W-IG and W-EC$^2$).}} We also let $y^*$ be known given a full realization of $\mathbf{x}^{(t)}$. This implies that upon performing all available tests, we know the correct decision for the problem instance. This is realistic and consistent with the settings of \citet{golovin2010near,chen2017efficient,rahbar2023efficient}, though they do not follow a CMAB formulation. As an example, consider a medical diagnosis problem. This property means that if we perform all available medical tests, we can determine the correct diagnosis. 

Regarding the costs, we assume that they directly depend on the outcomes of the tests and the decisions. As motivation, \addedtmlr{again} consider \added{the medical diagnosis problem. Here, the costs (e.g., time and patient's discomfort) of medical tests can vary based on the underlying affliction or the outcome of the test itself. For example, in the case of diagnosing cancer through biopsy procedures, the cost of the biopsy can change based on the specific underlying condition and the outcome of the test. If the suspected cancerous tissue can be accessed easily and shows a positive result for cancer cells, the biopsy procedure may cause lower costs in terms of time and discomfort. Conversely, if the tissue is in a challenging area or requires multiple biopsies, the procedure's complexity may cause higher costs, both in terms of time and increased patient's discomfort.}


We model the problem described above by an elegant design of a stochastic combinatorial multi-armed bandit (CMAB) problem \citep{cesa2012combinatorial} with semi-bandit feedback. Briefly, CMAB corresponds to a reward maximization problem with a set of base arms. In each time step, the agent selects a subset of base arms. After playing this subset of base arms, called a \emph{super arm}, the agent receives a reward from the environment. The objective is to maximize the expected sum of rewards within a time horizon $T$, which is typically formulated as a regret minimization task.
\begin{figure}[H]
    \centering
        \includegraphics[width=0.6\columnwidth]{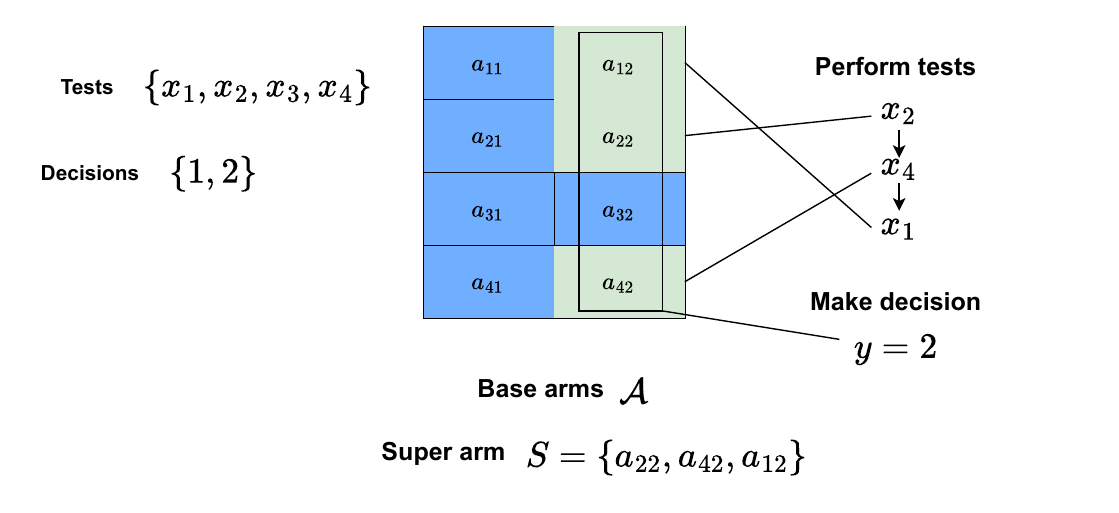}

    \caption{\addedtmlr{Illustration of base arms and super arms in our problem formulation. In this example, we have four tests ($n=4$) and two possible decisions ($m=2$). Based on these tests and decisions, we have $4\times2$ possible base arms $\mathcal{A}=\{a_{ij}|i\in[4], j\in[2]\}$. If we perform tests with indices $2, 4, 1$, and we make decision $y=2$ (based on the test results), then the super arm will be $S=\{a_{22},a_{42},a_{12}\}$.}} 
    \label{fig:superarm}
\end{figure}
In our problem, since the costs of tests depend on both test and decision outcomes, we need to consider a set of base arms that reflect both. Therefore, we define the set $\mathcal{A}$ of $n \times m$ base arms $a_{ij}$ ($i \in [n]$ and $j \in [m]$). In each time step $t$, the agent selects a super arm $S^{(t)} \in 2^{\mathcal{A}}$ and receives the reward $R(S^{(t)}) = R^{(t)}$. A base arm $a_{ij}$ in $S^{(t)}$ indicates that the test with index $i$ is performed for making decision $y^{(t)} = j$ \addedtmlr{(see Figure \ref{fig:superarm})}. We define $\mathcal{I}^{(t)} \subseteq 2^{\mathcal{A}}$ as the set of \emph{feasible} super arms at time $t$ which yield a decision, i.e., each feasible super arm consists of tests and a single decision (concretely, if $a_{ij}, a_{lk} \in S \in \mathcal{I}^{(t)}$, then $j = k$). 
The feedback at time $t$ of a base arm $a_{ij} \in \mathcal{A}$ (observable by the agent if $a_{ij} \in S^{(t)}$) is the corresponding outcome of test $x_i^{(t)}$, which given a decision $y^{(t)} = j$ is Bernoulli-distributed (\addedtmlr{test outcomes are binary}) with an unknown (to the agent) parameter $\theta_{ij} \triangleq Pr(x_i = 1 | y = j)$, for all $i,j$.
While tests may, in practice, be selected sequentially, the super arm (and base arms) are only known \emph{in hindsight} after the decision has been made. This is intentional and allows us to analyze the problem as a CMAB. 

We let the cost (assumed to be fixed and known) of each base arm $a_{ij}$ be $\mu_{ij}^{(0)} \in [0, 1]$ and $\mu_{ij}^{(1)} \in [0, 1]$ (given test outcomes $0$ and $1$, respectively). Then, we define the expected reward of the selected feasible super arm $S^{(t)}$ as the negative sum of expected base arm costs, such that $\mathbb{E}[R(S^{(t)})]= - \sum_{ij \in S^{(t)}} \mu_{ij}$, where
\begin{equation}
\label{eq:params}
\mu_{ij} \triangleq \mu_{ij}^{(1)}\times \theta_{ij} + \mu_{ij}^{(0)}\times (1-\theta_{ij}),
\end{equation}
with the vector of all $\mu_{ij}$'s denoted by $\boldsymbol{\mu}$ and the vector of all $\theta_{ij}$'s denoted by $\boldsymbol{\theta}$. We let $\boldsymbol{\mu}^*$ be the true underlying mean cost vector, with corresponding parameter vector $\boldsymbol{\theta}^*$. \rev{This is a novel problem; the previous studies are special cases of this problem with $\mu_{ij}^{(0)}=\mu_{ij}^{(1)}=c_i$, where $c_i$ is the cost of performing test $i$ and is independent of test outcome or the decision.} \addedtmlr{We show an outline of our online decision making protocol in Algorithm \ref{alg:protocol}.}

We define the suboptimality gap at time $t$ of a feasible super arm $S$ as
\begin{equation}
\Delta^{(t)}_{\boldsymbol{\mu}^*}(S) \triangleq f_{\boldsymbol{\mu}^*}(S^{*,(t)}) - f_{\boldsymbol{\mu}^*}(S),
\end{equation}
where $f_{\boldsymbol{\gamma}}(M) \triangleq (-\sum_{ij \in M}\gamma_{ij})$ denotes the expected reward of super arm $M$ given an arbitrary mean cost vector $\boldsymbol{\gamma}$, and $S^{*,(t)} \triangleq \argmax_{S\in \mathcal{I}^{(t)}} f_{\boldsymbol{\mu}^*}(S) $ is the optimal super arm found by the oracle applied on the feasible set $\mathcal{I}^{(t)}$. Then, the \emph{regret} (which the agent should minimize) for a time horizon $T$ is defined as
\begin{equation}
\text{Regret}_{\boldsymbol{\mu}^*}(T) \triangleq \sum_{t \in [T]} \Delta^{(t)}_{\boldsymbol{\mu}^*}(S^{(t)}). \label{eq:regret}
\end{equation}

\section{CMAB Methods for ODM}
\label{sec:method}

In our framework, we utilize \emph{oracles} (i.e., optimization algorithms) which, given a parameter vector $\boldsymbol{\theta}$, output super arms with maximum expected reward (or approximately, in the case of approximate oracles), such that
\begin{equation}
\label{eq:optim}
Oracle(\boldsymbol{\theta}) \triangleq \argmax_{S\in \mathcal{I}^{(t)}}  \mathbb{E}\left[R(S)\right \vert \boldsymbol{\theta}]. 
\end{equation}
The $\theta_{ij}^*$ values of the true parameter vector $\boldsymbol{\theta}^*$ are initially unknown, but to be able to employ prior knowledge, we consider a prior distribution over $\theta_{ij}^*$'s. Formally, we assume that each $\theta_{ij}^*$ follows a Beta distribution, $\mathrm{Beta}(\alpha _{ij}^{(0)}, \beta _{ij}^{(0)})$. \addedtmlr{The reason is that a Beta distribution is defined on range $[0,1]$, and parameter of a Bernoulli distribution should be in this range (each $\theta_{ij}^*$ is parameter of a Bernoulli distribution).} We also assume a fixed and known distribution over decision outcomes, denoted by $P(y)$ (i.e., the learning agent knows the marginal distribution \emph{a priori}). However, even if the oracle is assumed to always be able to find the correct decision (i.e., it only needs to minimize the number of tests), the optimization problem is NP-hard, and thus in practice, we use approximate oracles (introduced in Section \ref{sec:sequential_algs}). 

A \emph{greedy} method to address the problem defined in Section \ref{sec:formulation} is that in each time step $t$, we use an oracle with the current (MAP) estimate of $\boldsymbol{\theta}^*$. However, this greedy method may converge to a sub-optimal super arm due to lack of exploration. To balance exploration and exploitation, we adapt Thompson Sampling and BayesUCB to our problem.
\paragraph{Thompson Sampling.}
Algorithm \ref{alg:algorithm} summarizes our framework, with Thompson Sampling used as exploration method. In each time step $t$, we sample the parameters from the current posterior distribution (line \ref{line:sample}). We then (in lines \ref{line:oracle}-\ref{line:observe}) use the \emph{oracle} to get a super arm based on the sampled parameters, and perform the tests sequentially. Finally, we use Algorithm \ref{alg:posterior_update} to update the posterior distribution of the parameters. Algorithm \ref{alg:posterior_update} uses the fact that Beta distribution is conjugate prior to the Bernoulli distribution.\footnote{Since test outcomes are binary, then, given decision $y = j$, each test $x_i$ follows a Bernoulli distribution with parameter $\theta_{ij}$.}

\paragraph{BayesUCB.}
To utilize BayesUCB for exploration instead of Thompson Sampling, we have to modify lines \ref{line:sample}-\ref{line:oracle} of Algorithm \ref{alg:algorithm} such that instead of sampling parameters from the posterior distribution, we retrieve \emph{optimistic} parameter estimates. Like \citet{kaufmann2012bayesian} (done for classical bandits), we achieve this by utilizing quantiles of the posterior distribution. However, since neither gain function (Eq. \ref{eq:gain_w_ig} or Eq. \ref{eq:gain_w_ec2}) of the approximate oracles introduced in this section is monotonically increasing w.r.t. the provided parameter vector $\boldsymbol{\theta}$, a high value of a base arm parameter $\theta_{ij}$ does not necessarily mean that test $i$ is more likely to be selected. Hence, with $Q_\lambda (\eta)$ denoting the quantile function (such that under the distribution $\lambda$, $\text{Pr}\left(X \leq Q_\lambda (\eta)\right) = \eta$), we compute both an upper confidence bound $\overline{\theta}^{(t)}_{ij} = Q_{\text{Beta}(\alpha_{ij}^{(t)} ,\beta_{ij}^{(t)})} (1 - 1/t)$ and a lower confidence bound $\underline{\theta}^{(t)}_{ij} = Q_{\text{Beta}(\alpha_{ij}^{(t)} ,\beta_{ij}^{(t)})} (1/t)$ for each base arm $a_{ij}$, which we pass to the oracles. Then, while computing Eq. \ref{eq:gain_w_ig} and Eq. \ref{eq:gain_w_ec2}, whenever a high value for a term involving $\theta_{ij}$ makes the algorithm \emph{more} likely to select test $i$, we use $\overline{\theta}^{(t)}_{ij}$ as an optimistic estimate, and correspondingly, $\underline{\theta}^{(t)}_{ij}$ for terms making $i$ \emph{less} likely to be selected.
\begin{algorithm}[H]
\caption{Cost-efficient online decision making}
\label{alg:framework}
\label{alg:algorithm}
\begin{algorithmic}[1] 
\REQUIRE $P(y)$; $\boldsymbol{\mu}$; prior parameters $(\alpha_{ij}^{(0)} ,\beta_{ij}^{(0)}$)'s
\FOR{t = 1, 2, \dots, T}
\STATE Sample $\theta_{ij}^{(t)}$ from $\mathrm{Beta}(\alpha_{ij}^{(t-1)}, \beta_{ij}^{(t-1)})$ for all $i,j$. \alglinelabel{line:sample}
\STATE $S^{(t)} \gets Oracle(\boldsymbol{\theta}^{(t)})$ (e.g., Algorithm \ref{alg:seq_info}). \alglinelabel{line:oracle}
\STATE Make decision based on $S^{(t)}$, and observe $x_i^{(t)}$ for all $i$ such that $a_{ij} \in S^{(t)}$. \alglinelabel{line:observe}
\STATE Observe correct decision $y^{(t)}$.
\STATE Call Algorithm \ref{alg:posterior_update} and obtain $\mathrm{Beta}(\alpha_{ij}^{(t)}, \beta_{ij}^{(t)})$ for all $i,j$.
\ENDFOR
\end{algorithmic}
\end{algorithm}
\begin{algorithm}[H]
\caption{Posterior update}
\label{alg:posterior_update}
\begin{algorithmic}[1] 
\REQUIRE $S^{(t)}$; $x_i^{(t)}$ for all $i$ such that $a_{ij} \in S^{(t)}$; $y^{(t)}=k$; $\alpha_{ij}^{(t-1)}$'s and $\beta_{ij}^{(t-1)}$'s
\FOR{each $x_i^{(t)}$}
\IF{$x_i^{(t)} = 1$}
\STATE $\alpha _{ik} ^{(t)}$ $\gets$ $\alpha _{ik} ^{(t-1)} + 1$
\ELSE
\STATE $\beta _{ik} ^{(t)}$ $\gets$ $\beta _{ik} ^{(t-1)} + 1$
\ENDIF
\ENDFOR
\end{algorithmic}
\end{algorithm}

\subsection{Cost-Efficient Approximate Oracles}
\label{sec:sequential_algs}
Solving the optimization problem in Eq. \ref{eq:optim} is intractable in general (see, e.g., \citealt{golovin2010near, chakaravarthy2007decision}). 
Therefore, to implement line \ref{line:oracle} of Algorithm \ref{alg:framework}, we propose two different \emph{approximate oracles} using algorithms that greedily select tests to optimize a surrogate objective function, \emph{Information Gain} (IG) \citep{zheng2012efficient} and EC\textsuperscript{2} \citep{golovin2010near}. We introduce extensions of IG and EC\textsuperscript{2} to handle tests with stochastic costs, which we call \emph{Weighted IG} (W-IG) and \emph{Weighted EC\textsuperscript{2}} (W-EC\textsuperscript{2}) respectively. In both of these, all probabilities involved are computed using the fixed and known decision distribution $P(y)$ and the conditional distribution over test outcomes $P( \mathbf{x} \vert y ) $ (determined by a given parameter vector $\boldsymbol{\theta}$, see line \ref{line:oracle} in Algorithm \ref{alg:algorithm}). In particular, the dependence on $\boldsymbol{\theta}$ is omitted in the following section to simplify notation.

Before introducing W-IG and W-EC\textsuperscript{2}, we need to define the notions of \emph{hypothesis} and \emph{decision region}. We call a full realization of test outcomes (i.e., of all possible tests) a hypothesis. Therefore, with $n$ binary tests, we have $2^n$ different possible hypotheses. Let $\mathcal{H}\triangleq \{0,1\}^n$ be the set of all possible hypotheses for $n$ tests. We can partition $\mathcal{H}$ into $m$ disjoint sets. We call each of these disjoint sets a \emph{decision region}. Each decision region corresponds to a decision $y \in [m]$. So our objective will be finding the correct decision region, for which the cost of performing tests is low. We want to emphasize that one important difference between these oracles and standard CMAB oracles is that they are \emph{interactive}. In other words, during a single time step, they sequentially perform tests and observe outcomes to select subsequent tests and make a decision. \added{To our knowledge, our work is the first work which utilizes this approach for CMABs.}

\paragraph{Weighted IG algorithm (W-IG).} In the standard IG algorithm, based on a set of previously observed test outcomes, we select the test that maximizes the reduction of entropy in the decision regions. Formally, the gain of a test $i$ in the IG algorithm is defined as: 
\begin{align}
\Delta_{\text{IG}}\left(i \mid \mathbf{x}_{\mathcal{P}}\right) \triangleq \mathbb{H}(y|\mathbf{x}_{\mathcal{P}}) -  \mathbb{E}_{x_i|\mathbf{x}_{\mathcal{P}}}[\mathbb{H}(y|\mathbf{x}_{\mathcal{P} \cup \{i\}})], \label{eq:gain_ig}
\end{align}
where $\mathcal{P}$ is the set of previously performed tests, and $\mathbf{x}_{\mathcal{P}}$ is the vector of results of tests in $\mathcal{P}$. The random variable for decision regions is denoted $y$, and $\mathbb{H}(z)$ is the Shannon entropy of a random variable $z$.
In the W-IG algorithm, we sequentially perform tests that maximize $\Delta_{\text{IG}}\left(i \mid \mathbf{x}_{\mathcal{P}}\right)$ divided with the expected cost of test $i$ given the previously performed tests. In other words, after performing and observing the outcomes of the tests in $\mathcal{P}$, the next test to perform will be
\begin{equation}
    i^* = \argmax_{i\in ([n] \setminus \mathcal{P})} \Delta_{\text{W-IG}}\left(i \mid \mathbf{x}_{\mathcal{P}}\right),
\end{equation}
where 
\begin{equation}
    \Delta_{\text{W-IG}}\left(i \mid \mathbf{x}_{\mathcal{P}}\right)  \triangleq  \frac{\Delta_{\text{IG}}\left(i \mid \mathbf{x}_{\mathcal{P}}\right) }{ \text{Cost($i$)} }, \label{eq:gain_w_ig}
\end{equation}
and
\begin{equation}
    \text{Cost($i$)} \triangleq \sum_{j \in [m]} \sum_{q \in \{0, 1\}} \mu_{ij}^{(q)} \times Pr \left(x_i = q, y = j \mid \mathbf{x}_{\mathcal{P}}\right), \label{eq:conditional_test_cost}
 \end{equation}
 given test outcome costs $\mu_{ij}^{(0)}$ and $\mu_{ij}^{(1)}$ for all $i, j$. We call the set of hypotheses $\mathcal{H}_{i}= \{h|P(x_i|h)>0\}$ \emph{consistent} with the outcome of test $i$. We continue performing tests until all hypotheses in $\bigcap\limits_{i \in \mathcal{P}} \mathcal{H}_i$ belong to the same decision region, i.e., there is only one decision region remaining.
 
\paragraph{Weighted EC\textsuperscript{2} algorithm (W-EC\textsuperscript{2}).} The standard EC\textsuperscript{2} algorithm starts with a graph with $\mathcal{H}$ as the set of nodes. We have an edge $(h, h')$ between two hypotheses $h$ and $h'$ if and only if they do not belong to the same decision region. The weight of an edge $(h,h')$ is set to $w_{hh'} = P(h \mid \mathbf{x}_{\mathcal{P}}) \times P(h' \mid \mathbf{x}_{\mathcal{P}})$ where $P(\cdot \mid \mathbf{x}_{\mathcal{P}})$ is the posterior distribution of a hypotheses after performing tests. Naturally, the weight of a set of edges $\mathcal{E}$ is defined as the sum of weights of edges in $\mathcal{E}$, i.e., $w(\mathcal{E}) \triangleq \sum_{(h,h') \in \mathcal{E}} w_{hh'}$. 
We say that performing a test $i$ \emph{cuts} an edge $(h,h')$ if $h \notin \mathcal{H}_i$ or $h' \notin \mathcal{H}_i$. We denote the set of edges cut by test $i$ with $K(i)$. 

We now define the objective function for the EC\textsuperscript{2} algorithm as
\begin{equation}
    f_{\text{EC}^{2}}\left(\mathcal{P}\right) \triangleq w\left(\bigcup_{i \in \mathcal{P}} K(i)\right).
\end{equation}

Based on the objective function above, we define the EC\textsuperscript{2} gain of a test $i$ as
\begin{equation}
    \Delta_{\text{EC}^{2}}\left(i \mid \mathbf{x}_{\mathcal{P}}\right) \triangleq \mathbb{E}_{x_i|\mathbf{x}_{\mathcal{P}}}\left[f_{\text{EC}^{2}}\left(\mathcal{P} \cup \{i\}\right) - f_{\text{EC}^{2}}\left(\mathcal{P}\right)\right], \label{eq:gain_ec2}
\end{equation}
and, similar to W-IG, the W-EC\textsuperscript{2} gain as
\begin{equation}
    \Delta_{\text{W-EC}^{2}}\left(i \mid \mathbf{x}_{\mathcal{P}}\right)  \triangleq  \frac{\Delta_{\text{EC}^{2}}\left(i \mid \mathbf{x}_{\mathcal{P}}\right) }{ \text{Cost($i$)} }, \label{eq:gain_w_ec2}
\end{equation}
with $\text{Cost}(i)$ defined as in Eq. \ref{eq:conditional_test_cost}.

Also like with W-IG, in the W-EC\textsuperscript{2} algorithm, we sequentially perform tests that maximize the gain $\Delta_{\text{W-EC}^{2}}\left(i \mid \mathbf{x}_{\mathcal{P}}\right)$. We stop performing tests when all edges are cut, which means that we have only one decision region left. 

To be able to implement W-IG and W-EC\textsuperscript{2} (calculate $\Delta_{\text{W-IG}}$ and $\Delta_{\text{W-EC}\textsuperscript{2}}$), similar to standard IG and EC\textsuperscript{2}, we assume that the outcomes of the tests are conditionally independent given the decision. We show the test selection process in W-IG and W-EC\textsuperscript{2} algorithms in Algorithm \ref{alg:seq_info}. 
\begin{algorithm}[H]
\caption{Cost-efficient approximate oracle}
\label{alg:seq_info}
\begin{algorithmic}[1] 
\REQUIRE $\text{Alg} \in \{\text{W-IG}, \text{W-EC}^2\}$; $P(y)$; $\boldsymbol{\mu}$; $\boldsymbol{\theta}$
\STATE Enumerate hypotheses in $\mathcal{H}$
\STATE $\mathcal{P} = \{\}$
\WHILE{more than one decision region left}
\STATE Select next test (with $\Delta_{\text{Alg}} (\cdot)$ computed w.r.t. $\boldsymbol{\theta}$) $i^* = \argmax\limits_{i\in ([n] \setminus \mathcal{P})} \Delta_{\text{Alg}}\left(i \mid \mathbf{x}_{\mathcal{P}}\right)$.
\STATE Perform test $i^*$ and observe $x_{i^*}$.
\STATE Update $P(h \mid \mathbf{x}_{\mathcal{P}})$ based on results of tests in $\mathcal{P}$.
\ENDWHILE
\STATE Select the decision associated to the remaining decision region.
\end{algorithmic}
\end{algorithm}

\section{Theoretical Analysis}
\label{sec:theory}

In this section, we theoretically analyze the framework proposed in Section \ref{sec:method}. Specifically, we provide an upper bound on the Bayesian regret (defined below) of the online decision making procedure in Algorithm \ref{alg:framework}. \added{The analysis technique that we use extends the approach by \cite{russo2014learning} to our setting and yields an analysis which is significantly simpler than those by \cite{chen2013combinatorial} and \cite{wang2018thompson}. Furthermore, in our problem, due to the interactive nature of the oracle, the set of available arms changes in each time step. In general, a bandit problem with changing sets of available arms is called a \emph{bandit with sleeping arms}, a setting which is not considered in the analyses of combinatorial UCB (CUCB) by \cite{chen2013combinatorial} and combinatorial Thompson sampling (CTS) by \cite{wang2018thompson}. Our analysis holds for interactive oracles through the sleeping arms assumption. Moreover, the analyses of CUCB and CTS yield instance-dependent regret bounds, whereas we focus on an instance-independent (and Bayesian) bound, which also requires a different type of analysis (e.g., a different regret decomposition). Furthermore, the analysis of CTS is tailored for one specific choice of uninformative prior, while ours, in principle, allows for an arbitrary selection of the prior distribution (whether uninformative or informative).} 

In our problem setting, the set of feasible super arms $\mathcal I^{(t)}$ is restricted such that $y^{(t)} = y^*$ (i.e., correct decisions are guaranteed, and the objective is to minimize the cost of tests). \added{This means that a super arm is feasible if and only if the test outcomes determine the correct decision region.
}\footnote{\addedtmlr{We relax this assumption in the experiments of Section \ref{sec:unknown_regions}.}}
We also consider access to a perfect \emph{oracle} which can solve the optimization problem Eq. \ref{eq:optim}. \added{As mentioned earlier, the problem under consideration is NP-hard. On the other hand, assuming access to an exact oracle is common for analyses of regret bounds for CMABs, even in settings where the exact oracle needs to deal with an NP-hard problem. For instance, the work of \cite{huyuk2020thompson} assumes access to exact oracles for NP-hard network problems. \cite{aakerblom2023online} consider the intractable problem of stochastic bottleneck identification in an online setting. In fact, via the elegant CMAB formulation of our problem (in contrast to, e.g., the POMDP formulation of \citealt{rahbar2023efficient} and \citealt{chen2017efficient}), we are able to obtain a regret bound that is linear in the number of tests (rather than exponential).} 

The Bayesian regret is defined as the expected value of $\text{Regret}(T)$ (see Eq. \ref{eq:regret}), with the expectation taken over the prior distribution of $\boldsymbol{\mu}^*$ and other random variables (e.g., randomness in the rewards and the policy of the agent). Formally, 
\begin{equation}
\text{Bayesian Regret}(T) \triangleq \mathbb{E} \left[ \sum_{t \in [T]} \Delta^{(t)}_{\boldsymbol{\mu}^*}(S^{(t)})\right].   
\end{equation}
We then prove the following theorem which establishes an upper bound on the Bayesian regret of Thompson Sampling applied to our framework.
\begin{theorem}
\label{thm:bound}
    The Bayesian Regret of Algorithm \ref{alg:framework} is $\mathcal{O}(mn\sqrt{T\log T})$.
\end{theorem}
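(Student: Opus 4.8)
The plan is to control the Bayesian regret through the confidence-bound decomposition for posterior sampling of \citet{russo2014learning}, in the combinatorial semi-bandit form used by \citet{aakerblom2023online}. Since the reward is linear in the cost vector, $f_{\boldsymbol{\gamma}}(S) = -\sum_{ij\in S}\gamma_{ij}$, I would work with per-base-arm confidence intervals on the true mean costs. For each base arm I maintain a lower confidence bound $L_{ij}^{(t)}$ on $\mu_{ij}^*$ (measurable with respect to the history $\mathcal{F}_{t}$) and define the optimistic reward function $U_t(S) \triangleq -\sum_{ij\in S} L_{ij}^{(t)}$, which satisfies $U_t(S) \ge f_{\boldsymbol{\mu}^*}(S)$ precisely when $L_{ij}^{(t)} \le \mu_{ij}^*$ for all $ij\in S$. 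Writing $A_t \triangleq f_{\boldsymbol{\mu}^*}(S^{*,(t)}) - U_t(S^{*,(t)})$, $B_t \triangleq U_t(S^{*,(t)}) - U_t(S^{(t)})$, and $C_t \triangleq U_t(S^{(t)}) - f_{\boldsymbol{\mu}^*}(S^{(t)})$, the per-step gap decomposes as
\[
\Delta^{(t)}_{\boldsymbol{\mu}^*}(S^{(t)}) = A_t + B_t + C_t,
\]
and it remains to bound each term in expectation.

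The middle term vanishes, and this is where the Thompson Sampling property enters: conditioned on $\mathcal{F}_t$, the sampled vector $\boldsymbol{\theta}^{(t)}$ and the true vector $\boldsymbol{\theta}^*$ are identically distributed, so the (deterministic, perfect) oracle outputs $S^{(t)} = Oracle(\boldsymbol{\theta}^{(t)})$ and $S^{*,(t)} = Oracle(\boldsymbol{\theta}^*)$ share the same conditional distribution; as $U_t$ is $\mathcal{F}_t$-measurable, $\mathbb{E}[U_t(S^{*,(t)})] = \mathbb{E}[U_t(S^{(t)})]$ and hence $\mathbb{E}[B_t]=0$. For the optimism-failure term $A_t$, I would calibrate the confidence radius so that each LCB is valid with probability at least $1-1/T^2$ after a union bound over the $nm$ base arms; then $A_t\le 0$ off a rare event, and because costs lie in $[0,1]$ and $|S^{*,(t)}|\le n$, the summed contribution $\sum_t \mathbb{E}[A_t]$ is $o(1)$ and negligible.

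The dominant term is $\sum_t\mathbb{E}[C_t]$, where $C_t = \sum_{ij\in S^{(t)}}(\mu_{ij}^* - L_{ij}^{(t)})$ is a sum of confidence widths over the base arms actually played. Two observations close the argument. First, since $\mu_{ij} = \mu_{ij}^{(0)} + (\mu_{ij}^{(1)}-\mu_{ij}^{(0)})\,\theta_{ij}$ with $|\mu_{ij}^{(1)}-\mu_{ij}^{(0)}|\le 1$, any confidence interval for $\theta_{ij}$ transfers to one for $\mu_{ij}$ of at most the same width, so a Bernoulli/Beta concentration bound yields a width $O(\sqrt{\log T / N_{ij}^{(t)}})$, where $N_{ij}^{(t)}$ counts the observations of $a_{ij}$ (test $i$ performed while $y=j$) before time $t$ and the $\log$ factors from the union bound are absorbed. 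Second, for a fixed base arm the telescoping estimate $\sum_{s=1}^{N}1/\sqrt{s}\le 2\sqrt{N}$ gives $\sum_{t:\,a_{ij}\in S^{(t)}}(\mu_{ij}^*-L_{ij}^{(t)}) = O(\sqrt{N_{ij}^{(T)}\log T})$. Summing over all $nm$ base arms and bounding each count crudely by $N_{ij}^{(T)}\le T$ produces $\sum_t\mathbb{E}[C_t] = O(nm\sqrt{T\log T})$, matching the claim; combining with $\mathbb{E}[B_t]=0$ and the negligible $A_t$ term finishes the proof. (A Cauchy--Schwarz step exploiting $\sum_{ij}N_{ij}^{(T)}\le nT$ would sharpen the $m$-dependence to $\sqrt{m}$, but it is not needed for the stated bound.)

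The step I expect to be the main obstacle is rigorously justifying $\mathbb{E}[B_t]=0$ given the interactive, hindsight-defined super arm together with semi-bandit feedback that updates only the base arms $a_{ij}$ with $j=y^{(t)}$: I must argue that, conditional on $\mathcal{F}_t$, the perfect oracle acts as a fixed deterministic map of its parameter argument and that $\boldsymbol{\theta}^*$ retains its posterior law given the history, so that $Oracle(\boldsymbol{\theta}^{(t)})$ and $Oracle(\boldsymbol{\theta}^*)$ genuinely share the same conditional distribution. A secondary technical point is verifying the Beta-posterior concentration underlying the width bound within the Bayesian framework and managing the confidence-level bookkeeping so that only $\log T$ survives inside the square root; once these are settled, the remaining summations are routine.
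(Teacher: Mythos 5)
Your proposal is correct and follows essentially the same route as the paper's proof: the Russo--Van Roy confidence-bound decomposition with $\mathbb{E}[B_t]=0$ from the Thompson-Sampling distributional identity (the paper's Lemma 4.2), Hoeffding concentration transferred from $\theta_{ij}$ to $\mu_{ij}$ via $|\mu_{ij}^{(1)}-\mu_{ij}^{(0)}|\le 1$ with a union bound over arms and rounds (Lemma 4.3), and a telescoping sum of widths $\sum_{s\le N}1/\sqrt{s}\le 2\sqrt{N}$ (Lemma 4.4). The only cosmetic differences are that you use a single optimistic reward built from per-arm lower confidence bounds where the paper carries both $U^{(t)}$ and $L^{(t)}$ centered at the empirical mean, and you sum the per-arm widths directly (noting, correctly, that a Cauchy--Schwarz step with $\sum_{ij}N^{(T)}(ij)\le nT$ would sharpen the $m$-dependence) where the paper applies Cauchy--Schwarz with the cruder count bound $\sum_{ij}N^{(T)}(ij)\le mnT$, yielding the same $\mathcal{O}(mn\sqrt{T\log T})$ rate.
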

\begin{proof}
We begin our regret analysis by a decomposition of $\mathbb{E}[\Delta^{(t)}_{\boldsymbol{\mu}^*}(S^{(t)})]$ in the following lemma.
\begin{lemma}
\label{lemma:decomposition}
For any upper confidence bound $U^{(t)}: \mathcal{I}^{(t)} \rightarrow \mathbb{R}$ and lower confidence bound $L^{(t)}: \mathcal{I}^{(t)} \rightarrow \mathbb{R}$, 
\begin{equation}
\mathbb{E}[\Delta^{(t)}_{\boldsymbol{\mu}^*}(S^{(t)})] = \mathbb{E}[U^{(t)}(S^{(t)}) - L^{(t)}(S^{(t)})] + 
  \mathbb{E}[L^{(t)}(S^{(t)}) - f_{\boldsymbol{\mu}^*}(S^{(t)})]
+ \mathbb{E}[f_{\boldsymbol{\mu}^*}(S^{*,(t)}) - U^{(t)}(S^{*,(t)})].
\end{equation}

\end{lemma}
\begin{proof}
This lemma is based on Proposition 1 of \citet{russo2014learning}. Given the history of played super arms (and base arms) and their rewards, we know that in Thompson Sampling the optimal super arm $S^{*,(t)}$ and the played super arm $S^{(t)}$ follow the same distribution. Since $U^{(t)}(\cdot)$ is a deterministic function, we conclude that given the history, the expected values of $U^{(t)}(S^{(t)})$ and $U^{(t)}(S^{*,(t)})$ are equal, and we have:
\begin{equation}
\begin{split}
    \mathbb{E}[\Delta^{(t)}_{\boldsymbol{\mu}^*}(S^{(t)})] = 
  \mathbb{E}[U^{(t)}(S^{(t)}) - f_{\boldsymbol{\mu}^*}(S^{(t)})]
  + \mathbb{E}[f_{\boldsymbol{\mu}^*}(S^{*,(t)}) - U^{(t)}(S^{*,(t)})],
\nonumber
\end{split}
\end{equation} 
and by adding and subtracting $\mathbb{E}[L^{(t)}(S^{(t)})]$ we prove Lemma \ref{lemma:decomposition}.
\end{proof}
Additionally, we define lower and upper confidence bounds as
\begin{equation}   
L^{(t)}(S) \triangleq f_{\hat{\boldsymbol{\mu}}^{(t-1)}}(S) - \sum_{ij \in S} \sqrt{\frac{2 \log T}{N^{(t-1)}(ij)}} \end{equation} 
\begin{equation}
U^{(t)}(S) \triangleq f_{\hat{\boldsymbol{\mu}}^{(t-1)}}(S) + \sum_{ij \in S} \sqrt{\frac{2 \log T}{N^{(t-1)}(ij)}}
\end{equation}
where $\hat{\mu}^{(t)}_{ij}$ is the average cost of base arm $a_{ij}$ till time $t$, and $N^{(t)}(ij)$ is the number of times base arm $a_{ij}$ has been played till time $t$.
We continue the proof by bounding each term in Lemma \ref{lemma:decomposition}. For the second and third terms, we have the following lemma (where the proof is in Appendix \ref{proof:lem_upper_lower}).

\begin{lemma}
\label{lem:upper_lower} 
$\mathbb{E}[L^{(t)}(S^{(t)}) - f_{\boldsymbol{\mu}^*}(S^{(t)})] \leq \frac{2mn}{T}, \mathbb{E}[f_{\boldsymbol{\mu}^*}(S^{*,(t)}) - U^{(t)}(S^{*,(t)})] \leq \frac{2mn}{T}$.
\end{lemma}

In order to bound the first term of the regret decomposition, we utilize the following lemma (where the proof uses a similar technique as Lemma 8 by \citet{aakerblom2023online} for the combinatorial batched feedback setting (with batch size 1). See Appendix \ref{proof:conf_width}).
\begin{lemma}
\label{confidence_width}
$\sum_{t \in [T]} \mathbb{E}[U^{(t)}(S^{(t)}) - L^{(t)}(S^{(t)})] \leq 2mn\sqrt{8T\log T}.$
\end{lemma}

Now, based on Lemma \ref{lemma:decomposition}, Lemma \ref{lem:upper_lower}, and Lemma \ref{confidence_width} we have $\text{Bayesian Regret}(T) \leq 4mn + 2mn\sqrt{8T\log T} = \mathcal{O}(mn\sqrt{T\log T})$.
\end{proof}

We utilize the structure of the problem in our analysis and thereby, the bound derived in Theorem \ref{thm:bound} is significantly tighter than the ones by \citet{rahbar2023efficient,chen2017efficient} which take a different perspective than our CMAB formulation. Our bound is also consistent with Bayesian upper bounds for combinatorial semi-bandit methods developed in other (standard) settings.

\section{Experiments}
\label{sec:experiments}

In this section, we empirically validate our cost-efficient online decision making framework via various experiments \footnote{The source code for our experiments can be accessed \href{https://github.com/rahbararman/Cost-Efficient-ODM}{here} on GitHub.
}. Unless otherwise specified, we consider Beta(2,2) as the prior distribution of all $\theta_{ij}^*$'s.
We employ both of the cost-efficient approximate oracles introduced in Section \ref{sec:sequential_algs} (W-IG and W-EC\textsuperscript{2}) as the oracle in Algorithm \ref{alg:framework}. 
We also employ the hypothesis enumeration procedure proposed by \citet{chen2017efficient} to generate the most likely hypotheses for each decision region. In Section \ref{sec:ext_continuous}, we provide an extension of our framework to real-valued non-binary test outcomes. Moreover, In Section \ref{sec:unknown_regions}, we demonstrate the applicability of our framework in the setting where the decision regions for the set of hypotheses are not known. We run each experiment with 5 different random seeds \added{on a single-node CPU machine with 8 cores, 16GB memory and macOS. We implement the algorithms in Python mainly using NumPy \citep{harris2020array}, NetworkX  \citep{hagberg2008exploring} and SciPy \citep{2020SciPy-NMeth}}.
\paragraph{Data sets.} We apply our framework to the LED display domain data set from the UCI repository \citep{misc_led_display_domain_57}. Additionally, we use the ProPublica recidivism (Compas) data set \citep{recidivism2016} and the Fair Isaac (Fico) credit risk data set \citep{dataFico}, which both are preprocessed in the same way as by \citet{hu2019optimal}. The aforementioned data sets contain relatively large numbers of tests and few decision outcomes, which correspond well to, e.g., the medical diagnosis example in Section \ref{sec:introduction}. In applications like the charging station selection example, it might instead be reasonable to expect a large number of decision outcomes (individual charging stations or groups sharing certain characteristics) and relatively few tests (questions). Hence, in order to thoroughly evaluate the latter type of setting, we create a synthetic data set (called Navigation) with the desired characteristics. With $n=5$ tests and $m=20$ decision outcomes, we sample parameters $\theta_{ij}^*$ from the prior distribution Beta(2,2) for each $i \in [n]$ and $j \in [m]$, which subsequently use to generate the data set. 
 Moreover, in Section \ref{sec:troubleshooting}, we present the application of our framework to a real-world troubleshooting case study. Finally, to demonstrate the applicability of our methods to real-valued test outcomes, we investigate the Breast Cancer Wisconsin data set \citep{street1993nuclear} (Section \ref{sec:ext_continuous}).
\paragraph{Algorithms.} In our experiments, we investigate the cost of decision making using both W-IG and W-EC\textsuperscript{2} algorithms embedded in our framework (Algorithm \ref{alg:framework}). For each algorithm, we investigate both Thompson Sampling and BayesUCB for exploration. In addition to Thompson Sampling (TS) and BayesUCB (BUCB), we consider the following baselines for selecting tests in each time step:\\
    \textbf{Random Information Acquisition.} Within our framework, we can use a random subset of tests decision making. This baseline continues performing random tests until only one decision region is left (similar to W-IG and W-EC\textsuperscript{2}).\\
    \textbf{All.} This baseline always performs all available tests, and does not consider the cost.\\
    \textbf{DPP.} Determinantal Point Process (DPP) provides a way to select diverse sets. With this method, we use exact sampling \citep{derezinski2019exact} (finite DPP with likelihood kernel) to subsample columns (tests). We use the implementation of \citet{GPBV19}. To perform the sampling, in each time step $t$, we provide all data points (all test results for all data points) until time $t-1$ to the sampler. The number of selected tests is related to the number of eigenvectors of the kernel matrix.

\begin{table*}[ht]
\centering
\caption{Average cost ($\pm$ standard deviation) of decision making in a time step (over the entire learning period).}
\bigskip
\label{tab:cost}
\resizebox{0.705\width}{0.705\height}{
\begin{tabular}{c|c|c|c|c|c|c|c}
\textbf{Data set \textbackslash Algorithm} & \textbf{W-EC\textsuperscript{2}-TS} & \textbf{W-EC\textsuperscript{2}-BUCB} & \textbf{W-IG-TS}          & \textbf{W-IG-BUCB}       & \textbf{Random}          & \textbf{All}             & \textbf{DPP}             \\ \hline
Navigation           & 1.746 $\pm$ 0.030            & 1.746 $\pm$ 0.030              & 1.859 $\pm$ 0.020  & 1.867 $\pm$ 0.023 & 1.975 $\pm$ 0.008 & 2.305 $\pm$ 0.000 & 2.300 $\pm$ 0.002 \\ 
LED                  & 2.032 $\pm$ 0.053            & 2.061 $\pm$ 0.059              & 2.418 $\pm$ 0.015  & 2.438 $\pm$ 0.019 & 2.644 $\pm$ 0.025 & 3.147 $\pm$ 0.000 & 3.137 $\pm$ 0.002 \\ 
Fico                 & 1.159 $\pm$ 0.003            & 1.101 $\pm$ 0.002              & 1.935  $\pm$ 0.075 & 2.067 $\pm$ 0.064 & 3.078 $\pm$ 0.005 & 8.353 $\pm$ 0.000 & 8.249 $\pm$ 0.056 \\ 
Compas               & 3.056 $\pm$ 0.009            & 3.090 $\pm$ 0.017              & 4.197 $\pm$ 0.039  & 4.034 $\pm$ 0.028 & 4.719 $\pm$ 0.006 & 6.101 $\pm$ 0.000 & 5.923 $\pm$ 0.049 \\ 
Breast cancer        & 1.437 $\pm$ 0.019            & 1.554 $\pm$ 0.032              & 2.091 $\pm$ 0.048  & 2.213 $\pm$ 0.136  & 6.376 $\pm$ 0.087  & 14.181 $\pm$ 0.000 & 9.286 $\pm$ 0.077  \\ 
Troubleshooting        & 6.207 $\pm$ 0.013            & 7.615 $\pm$ 0.034              & 12.010 $\pm$ 0.078 & 12.563 $\pm$ 0.117 & 23.236 $\pm$ 0.072 & 38.113 $\pm$ 0.000 & 24.538 $\pm$ 0.309 \\ 
\end{tabular}
}

\end{table*}

\begin{figure*}[ht]
    \centering
        \includegraphics[width=0.72\textwidth]{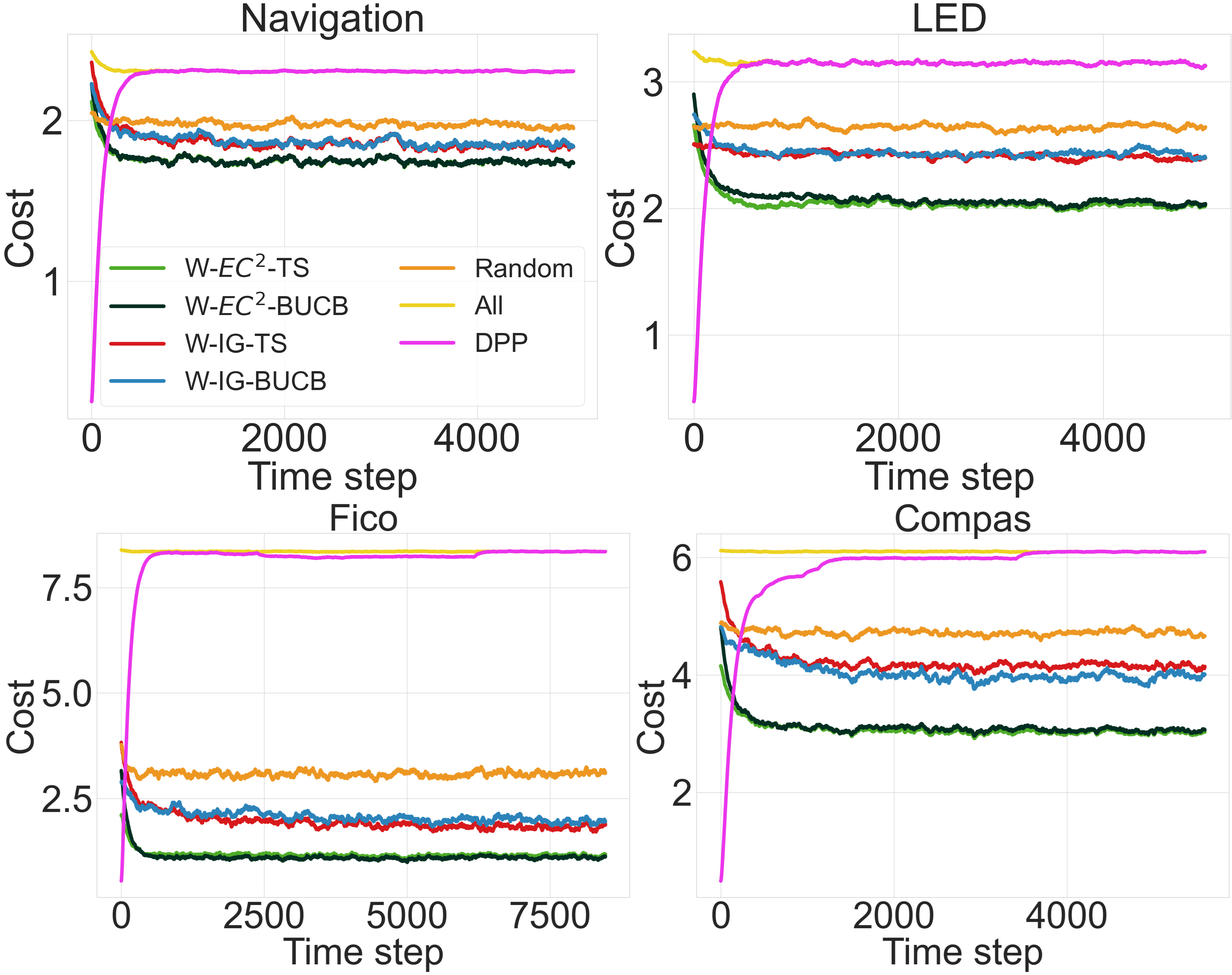}
        \label{fig:cost_mle_nav}
    \caption{Decision making costs of our framework applied to four different data sets.
    }
    \label{fig:cost_mle}
\end{figure*}

\subsection{Experimental Results}
\label{sec:known_dr}
We first investigate the four data sets Navigation, LED, Fico, and Compas.  
As mentioned, to guarantee accurate decision regions for all hypotheses, we implement the algorithm outlined by \citet{chen2017efficient} for enumeration of hypotheses. In short, this algorithm produces the most probable hypotheses based on a decision and the conditional probabilities of test outcomes associated with that decision. To assign the correct decisions to hypotheses, we extract the parameter vector $\boldsymbol{\theta}$ from the entire data set and subsequently utilize the enumeration procedure to generate the most likely hypotheses for each decision. However, the parameters are not known to the agent after the enumeration of hypotheses. We assign fixed costs $\mu_{ij}^{(0)} \in [0, 1]$ and $\mu_{ij}^{(1)} \in [0, 1]$ (randomly sampled from a uniform distribution) for each data set before performing our experiments.

In Figure \ref{fig:cost_mle}, we illustrate the cost\added{\footnote{\added{The costs reported here directly correspond to the (instant) regret up to only a constant term. Therefore, the order of the methods and their relative performances remain the same.}}} of performing tests (i.e., $-\mathbb{E}[R(S^{(t)})]$) during online learning for different algorithms. The cost of each test is calculated based on Eq. \ref{eq:params}, where the parameter $\theta_{ij}$ is derived from the data.
\added{
The (maximum) numbers of enumerated hypotheses per decision region are 2, 5, 70 and 70 for Navigation, LED, Fico and Compas datasets, respectively.}
Our results in Figure \ref{fig:cost_mle} show that our framework yields the lowest information acquisition cost for all data sets. DPP has a low cost initially since it only uses a few data points to perform sampling, but after a few time steps its cost becomes as high as `All'. \added{During these initial time steps DPP can make incorrect decisions.} Additionally, we observe that the W-EC\textsuperscript{2} algorithm always has the lowest cost. This result is consistent with the theoretical results of \citet{golovin2010near} for the cost of the original EC\textsuperscript{2} algorithm in the offline setting. We also observe that the exploration methods (TS and BayesUCB) exhibit very similar performances in terms of cost. During online learning, it is observed that the cost of the random algorithm remains relatively constant, whereas the costs associated with the W-IG and W-EC\textsuperscript{2} algorithms decrease until they converge to a low value. This behavior is attributed to the utilization of parameter estimates by the W-IG and W-EC\textsuperscript{2} algorithms when selecting tests. As the knowledge of the agent about these parameters improves over the course of learning, the associated costs decline. 
Furthermore, in Table \ref{tab:cost}, we report the average cost of decision making in a single time step for different data sets. It is obvious that our framework
incurs significantly lower costs for performing tests. 

\subsection{Application to Online Troubleshooting}
\label{sec:troubleshooting}

\begin{figure}[t!]
    \centering
    \includegraphics[width=0.6\columnwidth]{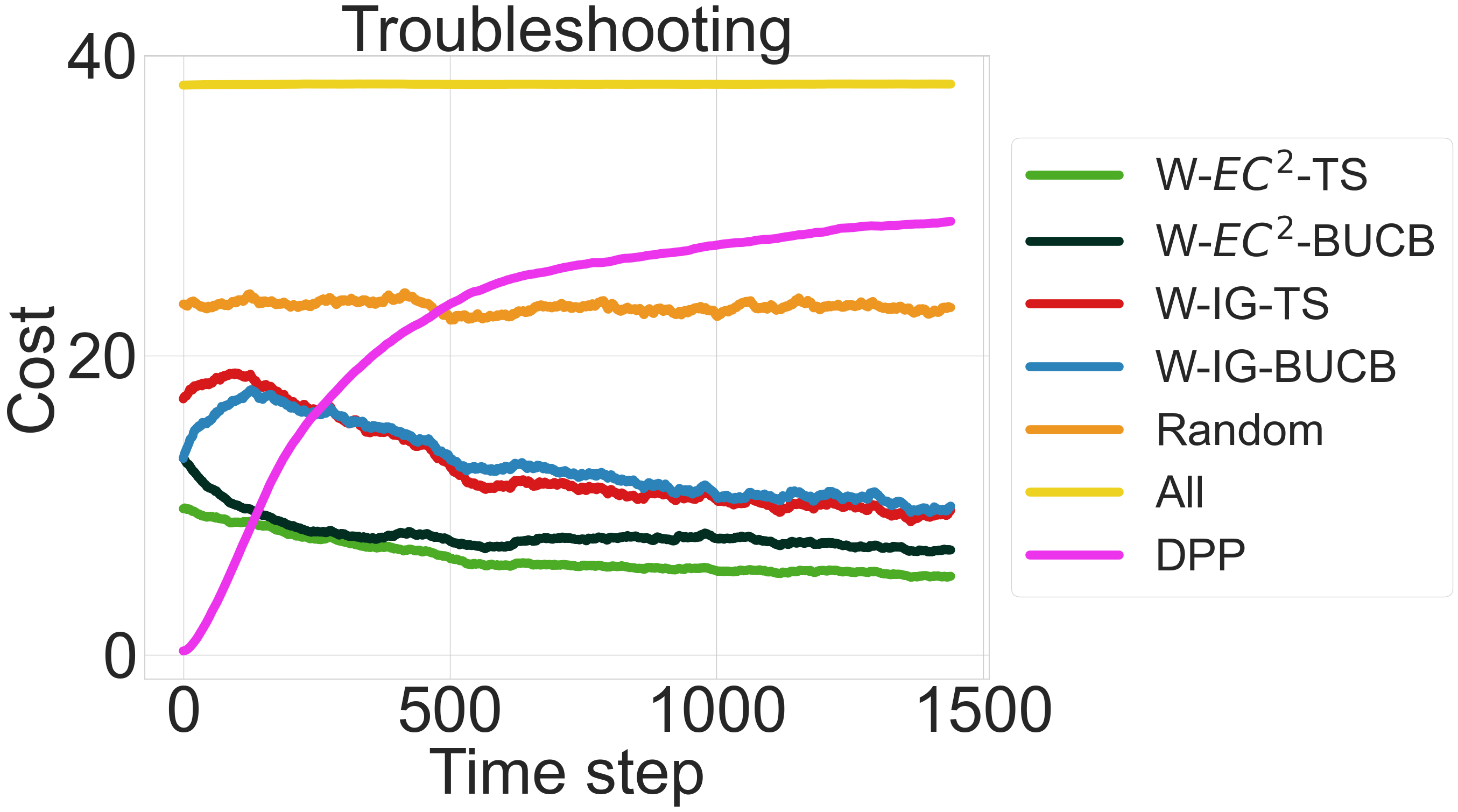}
    \caption{Decision making costs of our framework when applied to online troubleshooting.
    }
    \label{fig:cost_diag}
\end{figure}
In this section, we study the application of our decision making framework for online troubleshooting. 
The data set for this \emph{real-world} application is collected from contact center agents. The agents solve problems from mobile devices. We use a subset of this data set with 15 possible decisions and 74 tests. Each test corresponds to a symptom that a customer may or may not see on the device. 

We experiment on approximately 1500 different scenarios (i.e., roughly 100 scenarios for each decision). A scenario starts with a customer entering the system. Then the online learning agent aims at finding the correct decision by asking a sequence of multiple questions of the customer (performing tests). The goal for the agent is to ask the most informative questions in a cost-efficient manner. \added{We enumerate 15 hypotheses for each decision.} Figure \ref{fig:cost_diag} shows the cost of making decisions for different time steps (i.e., scenarios) in the same setting as Section \ref{sec:known_dr}. The results show that our framework enables the agent to find the correct decisions for troubleshooting in mobile devices with a significantly lower cost compared to other methods. Similar to previous observations, the W-EC\textsuperscript{2} algorithm remains the most cost-effective method within our framework.

\begin{figure}[ht]
    \centering
        \includegraphics[width=0.6\columnwidth]{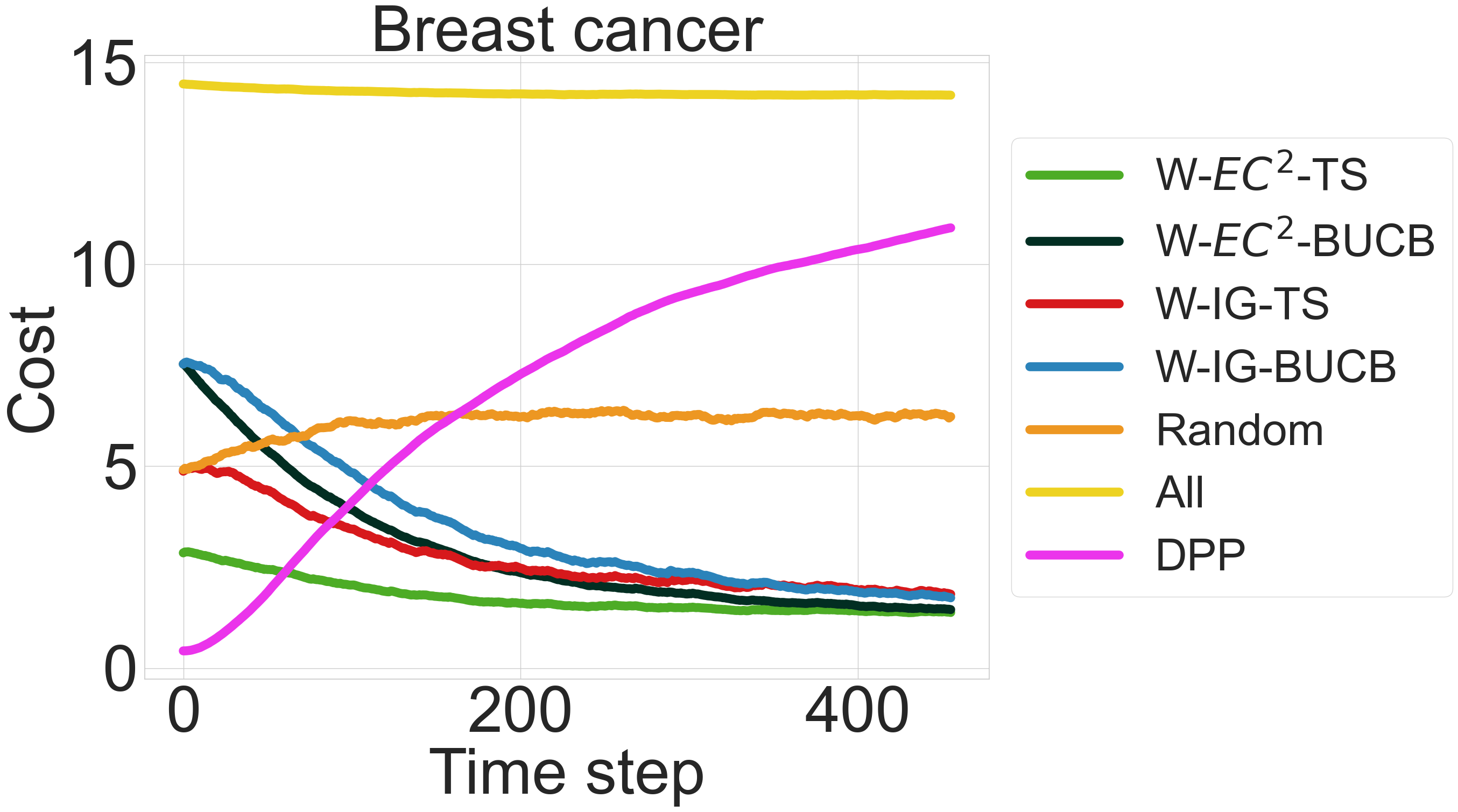}

    \caption{Decision making costs of our framework extended to real-valued test outcomes in the Breast Cancer Wisconsin data set.}
    \label{fig:cost_breastcancer}
\end{figure}

\begin{figure*}[t!]
    \centering
        \includegraphics[width=1\textwidth]{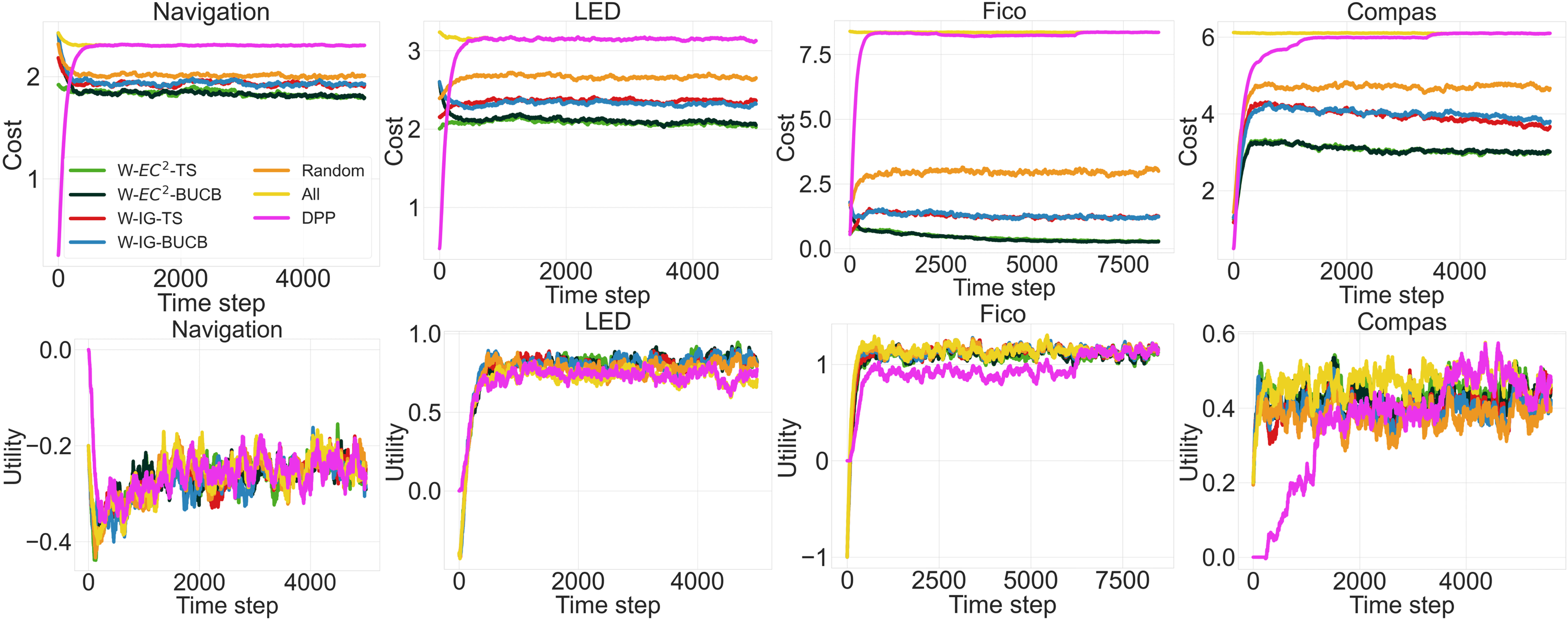}
    \caption{Decision making cost (first row) and utility (second row) of our framework applied to the setting that the decision regions are not known beforehand. 
    }
    \label{fig:cost}
\end{figure*}

\subsection{Extension to Real-Valued Test Outcomes}
\label{sec:ext_continuous}
In this section, we extend our experimental results to real-valued (non-binary) test outcomes. For this purpose, we adapt the discretization method proposed by \citet{rahbar2023efficient}. Specifically, for each test, we consider different thresholds for ``binarization''. Then, in each time step, we can calculate the gain ($\Delta_{\text{W-IG}}$ or $\Delta_{\text{W-EC}^2}$) for the tests using different binarization thresholds and choose the one with a maximal gain. We employ this method on the Breast Cancer Wisconsin data set. In this data set, we predict the diagnosis from 30 real-valued medical tests computed from an image of a fine needle aspirate. Figure \ref{fig:cost_breastcancer} illustrates the decision making cost incurred by our framework when applied to this data set. 
\added{We enumerate 70 hypotheses per decision region.}
Similar to the results depicted in Figure \ref{fig:cost_mle}, our framework consistently achieves the lowest cost, demonstrating its effectiveness in this setting. In particular, we observe that W-EC\textsuperscript{2}-TS yields the best results, and W-IG-TS, W-EC\textsuperscript{2}-BUCB, and W-IG-BUCB are other suitable methods for this task.

\subsection{Extension to Unknown Decision Regions}
\label{sec:unknown_regions}
In this section, to demonstrate the generality of our framework, we examine the setting where the decision regions for the enumerated hypotheses are not known. \added{To assign hypotheses to decision regions, we employ the hypothesis enumeration algorithm discussed in Section \ref{sec:known_dr} to generate the most probable hypotheses for each decision region based on the current parameter sample. Therefore, each hypothesis is assumed to correspond to the decision it is assigned to. }This situation may lead to incorrect decisions by the agent. Specifically, in the context of online learning, our objectives are: i) to achieve highly accurate decisions, and ii) to minimize the cost of performing tests. To quantify the accuracy of the decisions, we employ a utility function. Specifically, we assign a utility of $2$ for a correct decision and $-1$ for an incorrect one. Additionally, we consider an ``unknown'' decision when the correct hypothesis is assigned to multiple decision regions. We define the utility of such decisions as 0. 

The first row in Figure \ref{fig:cost} shows the cost of making decisions for different data sets (when the correct decisions of the hypotheses are not known). \added{We use the same numbers of enumerated hypotheses as Figure \ref{fig:cost_mle}.} Similar to the findings in Figure \ref{fig:cost_mle}, we observe that our framework (with both W-IG and W-EC\textsuperscript{2}) yields a final decision with significantly lower cost than other algorithms. Again, we observe that EC\textsuperscript{2} (with both Thompson Sampling and BayesUCB) has the lowest cost of performing tests. The IG algorithm results in the second lowest cost, and the random algorithm for picking tests is the third option. 

As previously mentioned, when faced with unknown decision regions, our goal is to enhance the accuracy of our decisions during online learning. The second row in Figure \ref{fig:cost} illustrates the utility achieved by various information acquisition algorithms. Notably, our framework demonstrates comparable performance to both All and DPP in the early stages of learning. Therefore, our approach proves capable of making precise decisions with a significantly reduced cost. We note that DPP yields very low costs in the early time steps, but the corresponding utility is also very low.

\section{Conclusion}

We propose a novel framework for online decision making based on an elegant design of a combinatorial multi-armed bandit problem, which incorporates the cost of performing tests, and where the costs may be stochastic and depend on both test and decision outcomes. Within this framework, we develop various cost-efficient online decision making methods such as W-EC$^2$ and W-IG. We also adapt Thompson Sampling and BayesUCB, methods that are commonly used for exploration. In particular, we provide a theoretical upper bound for the Bayesian regret of Thompson Sampling. We demonstrate the performance of the framework on a number of data sets from different domains.

\subsubsection*{Acknowledgments}
The work of Arman Rahbar and Morteza Haghir Chehreghani was partially supported by the Wallenberg AI, Autonomous Systems and Software Program (WASP) funded by the Knut and Alice Wallenberg Foundation. The work of Niklas {\AA}kerblom was partially funded by the Strategic Vehicle Research and Innovation Programme (FFI) of Sweden, through the project EENE (reference number: 2018-01937).

\bibliography{main}
\bibliographystyle{tmlr}
\appendix
\addedtmlr{
\section{Nomenclature}

\begin{longtable}{c|c}
\caption{Table of notation used in this paper.} \label{tab:notation} \\

\multicolumn{1}{c|}{\textbf{Notation}} & \multicolumn{1}{c}{\textbf{Description}}  \\ \hline 
\endfirsthead

\multicolumn{2}{c}%
{{\tablename\ \thetable{}: Table of notation used in this paper.}} \\\\

\multicolumn{1}{c|}{\textbf{Notation}} & \multicolumn{1}{c}{\textbf{Description}}  \\  \hline
\endhead

\endfoot

\endlastfoot

$t$ & Time step \\
$\mathbf{x}$ & A problem instance \\
$\mathbf{x}^{(t)}$ & A problem instance received at time $t$  \\ 
$i$ & Test index \\
$x_i$ & Test with index $i$ \\
$x_i^{(t)}$ & Test with index $i$ performed at time $t$ \\
$y$ & A decision \\
$y^{(t)}$ & A decision made at time $t$ \\
$n$ & Number of available tests \\
$m$ & Number of possible decisions \\
$y^*$ & Correct decision \\
$T$ & Time horizon \\
$a_{ij}$ & Base arm related to test with index $i$ and decision with index $j$ \\
$\mathcal{A}$ & Set of base arms \\
$S$ & A super arm \\
$S^{(t)}$ & A super arm selected at time $t$\\
$R^{(t)}$ & Reward received at time $t$ \\
$R(S)$ & Reward of super arm $S$ \\
$\mathcal{I}^{(t)}$ & Set of feasible super arms at time $t$ \\
$\theta_{ij}$ & Parameter of Bernoulli distribution for test $i$ and decision $j$ \\
$\mu_{ij}^{(0)}$ & Cost of base arm $a_{ij}$ when test outcome is $0$ \\
$\mu_{ij}^{(1)}$ & Cost of base arm $a_{ij}$ when test outcome is $1$ \\
$\mu_{ij}$ & Expected cost of base arm $a_{ij}$ \\
$\boldsymbol{\mu}$ & Vector of all $\mu_{ij}$'s \\
$\boldsymbol{\theta}$ & Vector of all $\theta_{ij}$'s \\
$\boldsymbol{\mu}^*$ & True mean cost vector \\
$\boldsymbol{\theta}^*$ & True parameter vector \\
$\Delta^{(t)}_{\boldsymbol{\mu}^*}(S)$ & Suboptimality gap at time $t$ of super arm $S$ with mean cost vector is $\boldsymbol{\mu}^*$ \\
$f_{\boldsymbol{\mu}}(S)$ & Expected reward of super arm $S$ given a mean cost vector $\boldsymbol{\mu}$ \\
$S^{*,(t)}$ & Optimal super arm at time $t$ \\
$\text{Regret}_{\boldsymbol{\mu}^*}(T)$ & Regret for a time horizon $T$ when true mean cost vector is $\boldsymbol{\mu}^*$ \\
$\alpha _{ij}^{(t)}$ & First shape parameter of Beta distribution related to $\theta_{ij}$ at time $t$ \\
$\beta _{ij}^{(t)}$ & Second shape parameter of Beta distribution related to $\theta_{ij}$ at time $t$ \\
$Q_\lambda (\eta)$ & Quantile function (if X has distribution $\lambda$, then $\text{Pr}\left(X \leq Q_\lambda (\eta)\right) = \eta$) \\
$\overline{\theta}^{(t)}_{ij}$ & Upper confidence bound for $\theta_{ij}$ used in BayesUCB \\
$\underline{\theta}^{(t)}_{ij}$ & Lower confidence bound for $\theta_{ij}$ used in BayesUCB \\
$\mathcal{H}$ & Set of all possible hypotheses \\
$h$ & A hypothesis \\
$\mathcal{P}$ & Set of performed tests \\
$\mathbf{x}_{\mathcal{P}}$ & Vector of results of tests in $\mathcal{P}$ \\
$\mathbb{H}(z)$ & Shannon entropy of a random variable $z$ \\
$\Delta_{\text{Alg}}\left(i \mid \mathbf{x}_{\mathcal{P}}\right)$ & Gain of test $i$ with algorithm Alg when $\mathcal{P}$ is the set of performed tests \\
$i^*$ & Index of next test to perform \\
$\mathcal{H}_{i}$ & Set of hypotheses consistent with the outcome of test $i$ \\
$(h, h')$ & An edge between two hypotheses $h$ and $h'$ \\
$w_{hh'}$ & Weight of an edge $(h,h')$ \\
$\mathcal{E}$ & A set of edges \\
$w(\mathcal{E})$ & Weight of a set of edges $\mathcal{E}$ \\
$K(i)$ & Set of edges cut by test $i$ \\
$f_{\text{EC}^{2}}\left(\mathcal{P}\right)$ & Objective function of EC$^2$ for a set of performed tests $\mathcal{P}$ \\
$\text{Bayesian Regret}(T)$ & Bayesian regret over a time horizon $T$ \\
$U^{(t)}(S)$ & Upper confidence bound for a super arm $S$ \\
$L^{(t)}(S)$ & Lower confidence bound for a super arm $S$ \\
$\hat{\mu}^{(t)}_{ij}$ & Average cost of base arm $a_{ij}$ till time $t$ \\
$N^{(t)}(ij)$ & Number of times base arm $a_{ij}$ has been played till time $t$ \\
$\Bar{x}_{ij}^{(\tau)}$ & Average of first $\tau$ observations of test $i$ under  decision $j$

\end{longtable}
}
\section{Additional Proofs}
\label{sec:proof_appendix}

\subsection{Proof of Lemma \ref{lem:upper_lower}}
\label{proof:lem_upper_lower}
\begin{proof}
    \[
 \mathbb{E}[L^{(t)}(S^{(t)}) - f_{\boldsymbol{\mu}^*}(S^{(t)})]  
\]
\begin{equation}
 =\mathbb{E}\Bigg[-\sum_{ij \in S^{(t)}}\hat{\mu}^{(t-1)}_{ij} - \sum_{ij \in S^{(t)}} \sqrt{\frac{2 \log T}{N^{(t-1)}(ij)}} + \sum_{ij \in S^{(t)}}\mu^*_{ij}\Bigg]  \nonumber
\end{equation}

\begin{equation}
 =\sum_{ij \in S^{(t)}}\mathbb{E}\Bigg[-\hat{\mu}^{(t-1)}_{ij} - \sqrt{\frac{2 \log T}{N^{(t-1)}(ij)}} + \mu^*_{ij}\Bigg] \nonumber
\end{equation}

\begin{equation}
 \leq \sum_{ij \in S^{(t)}}\mathbb{E}\Bigg[\Big|\mu^*_{ij} -\hat{\mu}^{(t-1)}_{ij}\Big| - \sqrt{\frac{2 \log T}{N^{(t-1)}(ij)}} \Bigg]  \nonumber
\end{equation}
($[z]^+ = \max\{0,z\}$)
\begin{equation}
 \leq \sum_{ij \in \mathcal{A}}\mathbb{E}\Bigg[\Bigg[\big|\mu^*_{ij} -\hat{\mu}^{(t-1)}_{ij}\big| - \sqrt{\frac{2 \log T}{N^{(t-1)}(ij)}}\Bigg]^+ \Bigg] \nonumber
\end{equation}

\begin{equation}
\label{eq:badeventprob}
\begin{split} 
    =\sum_{ij \in \mathcal{A}}\Bigg(\mathbb{E}\Bigg[\big|\mu^*_{ij} -\hat{\mu}^{(t-1)}_{ij}\big| - \sqrt{\frac{2 \log T}{N^{(t-1)}(ij)}} \Bigg| \text{``bad event''} \Bigg]  
   \times Pr\{\text{``bad event''}\}\Bigg),
\end{split}
\end{equation}  
\noindent
where ``bad event'' is $\big|\mu^*_{ij} -\hat{\mu}^{(t-1)}_{ij}\big| > \sqrt{\frac{2 \log T}{N^{(t-1)}(ij)}}$. 

The last equality comes from the fact that $\Big[|\mu^*_{ij} -\hat{\mu}^{(t-1)}_{ij}| - \sqrt{\frac{2 \log T}{N^{(t-1)}(ij)}}\text{ }\Big]^+ = 0$ if $\big|\mu^*_{ij} -\hat{\mu}^{(t-1)}_{ij}\big| \leq \sqrt{\frac{2 \log T}{N^{(t-1)}(ij)}}$.
We continue with the following lemma.
\begin{lemma}
\label{prob_bad_event}
$Pr\Bigg\{\exists t \in [T]\; \exists ij \in \mathcal{A}, |\mu^*_{ij} -\hat{\mu}^{(t-1)}_{ij}\big| > \sqrt{\frac{2 \log T}{N^{(t-1)}(ij)}}\Bigg\} \leq \frac{2}{T}$.  
\end{lemma}
\begin{proof}
    To prove Lemma \ref{prob_bad_event}, we use Eq. \ref{eq:params} which implies that $|\mu^*_{ij} -\hat{\mu}^{(t-1)}_{ij}\big| = |\mu_{ij}^{(1)} -\mu_{ij}^{(0)}| |\theta_{ij}^{*} -\hat{\theta}_{ij}^{(t-1)}|$ together with the fact that $|\mu_{ij}^{(1)} -\mu_{ij}^{(0)}| \leq 1$, as well as Hoeffding's inequality. The full proof is given in Appendix \ref{proof:prob_bad_event}.
\end{proof}

Continuing from Eq. \ref{eq:badeventprob}, using Lemma \ref{prob_bad_event}, and noting that $\big|\mu^*_{ij} -\hat{\mu}^{(t-1)}_{ij}\big|\leq 1$ we have
\begin{equation}
\label{eq:dec_term2}
 \mathbb{E}[L^{(t)}(S^{(t)}) - f_{\boldsymbol{\mu}^*}(S^{(t)})] \leq  \sum_{ij \in \mathcal{A}} \frac{2}{T} = \frac{2mn}{T}. 
\end{equation}

We can similarly show that
\begin{equation}
\label{eq:dec_term3}
 \mathbb{E}[f_{\boldsymbol{\mu}^*}(S^{*,(t)}) - U^{(t)}(S^{*,(t)})] \leq \frac{2mn}{T}. 
\end{equation}
\end{proof}

\subsection{Proof of Lemma \ref{prob_bad_event}}
\label{proof:prob_bad_event}
\begin{proof}
Let $\Bar{x}_{ij}^{(\tau)}$ be the average of first $\tau$ observations of $x_i$ under the decision region $y = j$. We have:
\begin{equation}
Pr\Bigg\{\exists t \in [T]\; \exists ij \in \mathcal{A}, \big|\mu^*_{ij} -\hat{\mu}^{(t-1)}_{ij}\big| > \sqrt{\frac{2 \log T}{N^{(t-1)}(ij)}}\Bigg\} \nonumber
\end{equation}

\begin{equation}
\begin{split}
   \stackrel{Eq. \eqref{eq:params}}{=} Pr\Bigg\{\exists t \in [T]\; \exists ij \in \mathcal{A}, \big|\mu_{ij}^{(1)} -\mu_{ij}^{(0)}\big| \big|\theta_{ij}^{*} -\hat{\theta}_{ij}^{(t-1)}\big| 
   > 
    \sqrt{\frac{2 \log T}{N^{(t-1)}(ij)}}\Bigg\} 
\end{split} \nonumber
\end{equation}

(since $\big|\mu_{ij}^{(1)} -\mu_{ij}^{(0)}\big| \leq 1$)

\begin{equation}
\leq Pr\Bigg\{\exists t \in [T]\; \exists ij \in \mathcal{A}, \big|\theta_{ij}^{*} -\hat{\theta}_{ij}^{(t-1)}\big| > \sqrt{\frac{2 \log T}{N^{(t-1)}(ij)}}\Bigg\} \nonumber
\end{equation}

(Union bound)

\begin{equation}
\leq \sum_{t\in [T]}\sum_{ij\in \mathcal{A}}\sum_{\tau = 1}^{(t-1)} Pr\Bigg\{\big|\theta_{ij}^{*} -\Bar{x}_{ij}^{(\tau)}\big| > \sqrt{\frac{2 \log T}{\tau}}\Bigg\} \nonumber
\end{equation}

(assuming $T\geq|\mathcal{A}|$)

\begin{equation}
\leq T^3 \times 2T^{-4} = \frac{2}{T}.
\nonumber
\end{equation}

The last inequality is due to Hoeffding's inequality (e.g., Theorem A.1 of \citet{slivkins2019introduction} with $\alpha=2$).

\end{proof}

\subsection{Proof of Lemma \ref{confidence_width}}
\label{proof:conf_width}
\begin{proof}
\begin{equation}
\sum_{t \in [T]} \mathbb{E}[U^{(t)}(S^{(t)}) - L^{(t)}(S^{(t)})]  \nonumber
\end{equation}

\begin{equation}
=\sum_{t \in [T]} \mathbb{E}\Bigg[2\sum_{ij \in S^{(t)}}\sqrt{\frac{2\log T}{N^{(t-1)}(ij)}}\Bigg] \nonumber
\end{equation}

\begin{equation}
\leq \sqrt{8\log T} \sum_{t \in [T]} \mathbb{E}\Bigg[\sum_{ij \in S^{(t)}}\frac{1}{\sqrt{N^{(t-1)}(ij)}}\Bigg]  \nonumber
\end{equation}

\begin{equation}
= \sqrt{8\log T} \sum_{ij \in \mathcal{A}} \mathbb{E}\Bigg[\sum_{t: ij \in S^{(t)}}\frac{1}{\sqrt{N^{(t-1)}(ij)}}\Bigg]  \nonumber
\end{equation}

\begin{equation}
= \sqrt{8\log T} \sum_{ij \in \mathcal{A}} \mathbb{E}\Bigg[\sum_{l \in [N^{(T)}(ij)]}\frac{1}{\sqrt{l}}\Bigg] \nonumber
\end{equation}

(Lemma 1 of \citealt{russo2014learning})

\begin{equation}
\leq 2\sqrt{8\log T} \sum_{ij \in \mathcal{A}} \mathbb{E}\Bigg[\sqrt{N^{(T)}(ij)}\Bigg] \nonumber
\end{equation}

(Cauchy–Schwarz inequality)

\begin{equation}
\leq 2\sqrt{8\log T} \; \mathbb{E}\Bigg[\sqrt{mn\sum_{ij \in \mathcal{A}}N^{(T)}(ij)}\Bigg] \nonumber
\end{equation}

\begin{equation}
\leq 2\sqrt{8\log T} \; \mathbb{E}\Bigg[\sqrt{(mn)(mn)T}\Bigg]\nonumber
\end{equation}

\begin{equation}
 = 2mn\sqrt{8T\log T}.\nonumber
\end{equation}
\end{proof}

\end{document}